\documentclass{article} 
\usepackage{iclr2023_conference,times}


\usepackage{amsmath,amsfonts,bm}
\usepackage{amsthm}









\def\eqref#1{equation~\ref{#1}}









\def\1{\bm{1}}










\DeclareMathAlphabet{\mathsfit}{\encodingdefault}{\sfdefault}{m}{sl}
\SetMathAlphabet{\mathsfit}{bold}{\encodingdefault}{\sfdefault}{bx}{n}













\usepackage{hyperref}
\usepackage{url}

\newtheorem{remark}{Remark}
\newtheorem{proposition}{Proposition}

\newtheorem{assumption}{Assumption}

\newtheorem{definition}{Definition}

\newcommand{\RR}{\mathbb{R}}

\newcommand{\cX}{\mathcal{X}}
\newcommand{\cS}{\mathcal{S}}

\newcommand{\bx}{\mathbf{x}}

\newcommand{\bz}{\bm{z}}
\newcommand{\bw}{\bm{w}}

\newcommand{\bv}{\bm{v}}
\newcommand{\bu}{\bm{u}}

\newcommand{\be}{\bm{e}}

\newcommand{\by}{\bm{y}}

\title{Why self-attention is Natural for Sequence-to-Sequence Problems? A Perspective from Symmetries}


\author{Chao Ma, Lexing Ying\\
Department of Mathematics\\
Stanford University\\
Stanford, CA 94305, USA \\
\texttt{\{chaoma, lexing\}@stanford.edu}
}

%

\iclrfinalcopy 
\begin{document}

\maketitle

\begin{abstract}
In this paper, we show that structures similar to self-attention are natural to learn many sequence-to-sequence problems from the perspective of symmetry. 
Inspired by language processing applications, we study the orthogonal equivariance of {\it seq2seq functions with knowledge}, which are functions taking two inputs---an input sequence and a ``knowledge''---and outputting another sequence.
The knowledge consists of a set of vectors in the same embedding space as the input sequence, containing the information of the language used to process the input sequence. 
We show that orthogonal equivariance in the embedding space is natural for seq2seq functions with knowledge, and under such equivariance the function must take the form close to the self-attention. This shows that network structures similar to self-attention are the right structures to represent the target function of many seq2seq problems.
The representation can be further refined if a ``finite information principle'' is considered, or a permutation equivariance holds for the elements of the input sequence. 
\end{abstract}

\section{Introduction}
Neural network models using self-attention, such as Transformers~\cite{vaswani2017attention}, have become the new benchmark in the fields such as natural language processing and protein folding. Though, the design of self-attention is largely heuristic, and theoretical understanding of its success is still lacking. 
In this paper, we provide a perspective for this problem from the symmetries of sequence-to-sequence (seq2seq) learning problems. 
By identifying and studying appropriate symmetries for seq2seq problems of practical interest, we demonstrate that structures like self-attention are natural for representing these problems.

Symmetries in the learning problems can inspire the invention of simple and efficient neural network structures. This is because symmetries reduce the complexity of the problems, and a network with matching symmetries can learn the problems more efficiently. For instance, convolutional neural networks (CNNs) have seen great success on vision problems, with the translation invariance/equivariance of the problems being one of the main reasons. This is not only observed in practice, but also justified theoretically~\cite{li2020convolutional}. Many other symmetries have been studied and exploited in the design of neural network models. Examples include permutation equivariance~\cite{zaheer2017deep} and rotational invariance~\cite{kim2020cycnn,chidester2019rotation}, with various applications in learning physical problems. See Section~\ref{ssec:rel_works_sym} for more related works.

In this work, we start from studying the symmetry of seq2seq functions in the embedding space, the space in which each element of the input and output sequences lie. 
For a language processing problem, for example, words or tokens are usually vectorized by a one-hot embedding using a dictionary. In this process, the order of words in the dictionary should not influence the meaning of input and output sentences. Thus, if a permutation is applied on the dimensions of the embedding space, the input and output sequences should experience the same permutation, without other changes. This implies a permutation equivariance in the embedding space. In our analysis, we consider equivariance under orthogonal group, which is slightly larger than the permutation group.
We show that if a function $f$ is orthogonal equivariant in the embedding space, then its output can be expressed as linear combinations of the elements of the input sequence, with the coefficients only depending on the inner products of these elements. Concretely, let $X\in\RR^{d\times n}$ denote an input sequence with length $n$ in the embedding space $\RR^d$. If $f(QX)=Qf(X)$ holds for any orthogonal $Q\in\RR^{d\times d}$, then there exists a function $g$ such that
\vspace{-1mm}
\begin{equation*}
    f(X)=Xg(X^TX).
\vspace{-1mm}
\end{equation*}
However, the symmetry on the embedding space is actually more complicated than a simple orthogonal equivariance. In Section~\ref{ssec:orth_know}, we show that the target function for a simple seq2seq problem is not orthogonal equivariant, because the target function works in a fixed embedding. To accurately catch the symmetry in the embedding space, we propose to study {\it seq2seq functions with knowledge}, which are functions with two inputs, $f(X,Z)$, where $X\in\RR^{d\times n}$ is the input sequence and $Z\in\RR^{d\times k}$ is another input representing our ``knowledge'' of the language. The knowledge lies in the same embedding space as $X$, and is used to extract information from $X$. With this additional input, the symmetry in the embedding space can be formulated as an orthogonal equivariance of $f(X,Z)$, i.e. $f(QX,QZ)=Qf(X,Z)$ for any inputs and orthogonal matrix $Q$. 
Intuitively understood, in a language application, as long as the knowledge is always in the same embedding as the input sequence, the meaning of the output sequence will not change with the embedding. 
Based on the earlier theoretical result for simple orthogonal equivariant functions, if a seq2seq function with knowledge is orthogonal equivariant, then it must have the form
\begin{equation*}
    f(X,Z) = Xg_1(X^TX, Z^TX, Z^TZ) + Zg_2(X^TX, Z^TX, Z^TZ)
\end{equation*}
If $Z$ is understood as a parameter matrix to be learned, the following subset of this representation,
\begin{equation*}
    f(X,Z) = Xg(X^TZ),
\end{equation*}
is close to a self-attention used in practice, with $Z$ being the concatenation of query and key parameters. This reveals one possible reason behind the success of self-attention based models on language problems.

Based on the results from orthogonal equivariance, we further study the permutation equivariance on the elements of the input sequence. Under this symmetry, we show that seq2seq functions with knowledge have a further reduced form which only involves four different nonlinear functions. Finally, discussions are made on the possible forms of $g$ (or $g_1$ and $g_2$) in the formulations mentioned above. Based on the assumption that these functions are described by a finite amount of information (although their output sizes need to change with respect to the sequence length $n$), we reason that quadratic forms with a nonlinearity used in usual self-attentions is one of the simplest choice of $g$. We also discuss practical considerations that add the complexity of the models used in application compared with theoretical forms. 
\vspace{-2mm}

\section{Background and related work}
\subsection{Neural networks and symmetries}\label{ssec:rel_works_sym}
Implementing symmetries in neural networks can help the models learn certain problems more efficiently. A well-known example is the success of convolutional neural networks (CNNs) on image problems due to their (approximate) translation invariance~\cite{lecun1989backpropagation}. 
Many types of symmetries have been explored in the design of neural networks, such as permutation equivariance and invariance~\cite{zaheer2017deep,guttenberg2016permutation,rahme2021permutation,qi2017pointnet,qi2017pointnet++}, rotational equivariance and invariance~\cite{thomas2018tensor,shuaibi2021rotation,fuchs2020se,kim2020cycnn}, and more~\cite{satorras2021n,wang2020incorporating,ling2016machine,ravanbakhsh2017equivariance}. Some works deal with multiple symmetries. In~\cite{villar2021scalars}, the forms of functions with various symmetries are studied. These networks see many applications in physical problems, where symmetries are intrinsic in the problems to learn. Examples include fluid dynamics~\cite{wang2020towards,ling2016reynolds,li2020flow,mattheakis2019physical}, molecular dynamics~\cite{anderson2019cormorant,schutt2021equivariant,zhang2018deep}, quantum mechanics~\cite{luo2021gauge2,luo2021gauge1,vieijra2020restricted}, etc. Theoretical studies have also been conducted to show the benefit of preserving symmetry during learning~\cite{bietti2021sample,elesedy2021provably,li2020convolutional,mei2021learning}.

\subsection{self-attention}
self-attention~\cite{vaswani2017attention,parikh2016decomposable,paulus2017deep,lin2017structured,shaw2018self} is a type of attention mechanism~\cite{bahdanau2014neural,luong2015effective} that attends different elements in a same input sequence. It is the building block of a series of large language models (e.g. \cite{devlin2018bert,brown2020language,raffel2020exploring}), and is under extensive research. See \cite{bommasani2021opportunities,niu2021review} for reviews.

As a preparation for later studies, we briefly summarize the structure of (multihead) self-attention. 
A self-attention is a seq2seq operator which takes a sequence of vectors as the input, and another sequence of vectors (of the same size) as the output. Let $X\in\RR^{d\times n}$ be the input sequence with length $n$. A self-attention computes the output using three parameter matrices: the query parameters $W_Q\in\RR^{d_1\times d}$, the key parameters $W_K\in\RR^{d_1\times d}$, and the value parameters $W_V\in\RR^{d\times d}$. Given the input $X$, a ``query'' and a ``key'' is computed for every column of $X$ by multiplying with $W_Q$ and $W_K$, i.e. we compute $Q(X) = W_QX \in \RR^{d_1\times n}$ and $K(X) = W_KX \in \RR^{d_1\times n}$. 
Then, an attention matrix is obtained by computing the inner product of all pairs of queries and keys:
\begin{equation*}
    A(X) = Q(X)^TK(X) = X^TW_Q^TW_KX \in\RR^{n\times n}.
\end{equation*}
Next, a weight matrix is computed by applying softmax over rows of $A$, and the output of the attention is obtained by a linear combination of the ``values'', $W_VX$, using rows in the weight matrix as coefficients. In practice, $A(X)$ is usually scaled by a factor of $1/\sqrt{d_1}$, and a residual connection is added, thus we have
\begin{equation}\label{eqn:attention1}
    Attn(X) = X + W_V X \cS_r\left(\frac{1}{\sqrt{d_1}}X^TW_Q^TW_KX\right)^T
\end{equation}
where $\cS_r(\cdot)$ computes the softmax of an input matrix over rows.

\begin{remark}
In this paper we use $X\in\RR^{d\times n}$ to denote sequences with length $n$ in the space $\RR^d$. Each column of $X$ is an element of the sequence. In many works the same sequence is represented by an $n\times d$ matrix. The two representations are intrinsically equivalent.
\end{remark}

The self-attention mechanism described above consists of one ``head'', in the sense that we have one query, key and value for each element of $X$. Similar to the way that we add more neurons to a layer of a fully connected neural network, we can add more heads to a self-attention, which gives a multihead attention. 
For a multihead attention with $m$ heads, we have $m$ different query, key, and value matrices, denoted by $W_Q^{(i)}$, $W_K^{(i)}$, and $W_V^{(i)}$. $W_Q^{(i)}$ and $W_K^{(i)}$ are still $d_1\times d$ matrices, while $W_V^{(i)}$ are $d_2\times d$ matrices. Besides, in order to still use the residual connection, an output parameter matrix $W_{out}^{(i)}\in\RR^{d\times d_2}$ is added for each head to transform the value vectors in $\RR^{d_2}$ into vectors in $\RR^d$. With these parameters, each head is similar to a single-head self-attention:
\begin{equation*}
    head_i(X) = W_V^{(i)}X \cS_r\left(\frac{1}{\sqrt{d_1}}X^T(W_Q^{(i)})^TW_K^{(i)}X\right)^T,
\end{equation*}
and the output of the multihead attention is
\begin{align*}
Attn_m(X) & = X + \sum\limits_{i=1}^m W_{out}^{(i)} head_i(X)\\
&= X + \sum\limits_{i=1}^n W_{out}^{(i)}W_V^{(i)}X \cS_r\left(\frac{1}{\sqrt{d_1}}X^T(W_Q^{(i)})^TW_K^{(i)}X\right)^T.
\end{align*}

\begin{remark}
In a practical model like a Transformer, a fully-connected layer is sometimes added after multihead attentions. The fully-connected layer is applied to each element of the output sequence.
\end{remark}

\section{Orthogonal equivariance in the embedding space}\label{sec:orth_equi}

In this section, we focus on the orthogonal equivariance in the embedding space. We show that functions with such equivariance enjoy a representation which takes a similar but more general form as self-attention. We start from a theoretical characterization for simple seq2seq functions with orthogonal equivariance (Proposition~\ref{prop:orth_equi_1}). Then, we introduce and study a class of functions called {\it seq2seq function with knowledge}, whose form is inspired by typical seq2seq learning problems.

\subsection{Simple orthogonal equivariant functions}
We first consider orthogonal equivariant functions given by the following definition:

\begin{definition}\label{def:orth_equi_1}
Let $\cX=\bigcup_{n=1}^\infty \RR^{d\times n}$ be the space of all sequences in $\RR^d$. Let $f:\cX\rightarrow\cX$ be a sequence to sequence function. $f$ is called {\it orthogonal equivariant} in the embedding space if for any $X\in\cX$ and orthogonal matrix $Q\in\RR^{d\times d}$, there is $f(QX)=Qf(X)$.
\end{definition}

For orthogonal equivariant functions, the following proposition shows that any column of the output must be a linear combination of the the columns of $X$, with the coefficients depending only on the inner products between $X$'s columns. 

\begin{proposition}\label{prop:orth_equi_1}
Let $f:\cX\rightarrow\cX$ be orthogonal equivariant in the embedding space given by Definition~\ref{def:orth_equi_1}. Then, there exists a function $g$ taking $X^TX$ as input and producing a matrix with appropriate shape as output, such that for all $X\in\cX$, we have
\begin{equation*}
    f(X) = Xg(X^TX).
\end{equation*}
\end{proposition}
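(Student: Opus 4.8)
The plan is to prove the statement in two movements: first show that every column of $f(X)$ lies in the span of the columns of $X$, and then show that the coefficients expressing this dependence are determined by $X^TX$ alone. Write $V = \mathrm{col}(X)\subseteq\RR^d$ for the column space of $X$ and $V^\perp$ for its orthogonal complement. For the first movement I would exploit the stabilizer of $X$ inside the orthogonal group: any orthogonal $Q$ that acts as the identity on $V$ satisfies $QX = X$, so equivariance gives $Qf(X) = f(QX) = f(X)$. Hence every column of $f(X)$ is fixed by every orthogonal map that is the identity on $V$ and arbitrary on $V^\perp$. A short argument then forces those columns into $V$: a vector with a nonzero component in $V^\perp$ can be moved by a rotation of $V^\perp$ when $\dim V^\perp \ge 2$, or flipped by the reflection $-I$ on $V^\perp$ when $\dim V^\perp = 1$; the degenerate case $V = \RR^d$ is trivial. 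Thus $f(X) = XC$ for some coefficient matrix $C$.

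The second movement is to make the coefficients a function of $X^TX$. The key geometric fact is that the Gram matrix determines the configuration up to an orthogonal change of basis: if $X^TX = X'^TX'$ then $X' = QX$ for some orthogonal $Q$. I would prove this by noting that $a^T(X^TX)a = \|Xa\|^2$, so the two configurations have identical linear-dependence relations and identical inner products; the assignment $Xa\mapsto X'a$ is therefore a well-defined linear isometry from $V$ onto $\mathrm{col}(X')$, which extends to an orthogonal map on all of $\RR^d$ because the two orthogonal complements have equal dimension. Given this, I would define $g(X^TX) := X^{+}f(X)$, where $X^{+}$ is the Moore--Penrose pseudoinverse. Using $(QX)^{+} = X^{+}Q^T$ for orthogonal $Q$ together with equivariance shows that $X^{+}f(X)$ is unchanged when $X$ is replaced by $QX$, so $g$ really is a function of the Gram matrix. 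Finally, since $XX^{+}$ is the orthogonal projection onto $V$ and the columns of $f(X)$ already lie in $V$ by the first movement, we get $Xg(X^TX) = XX^{+}f(X) = f(X)$, as desired.

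The step I expect to be the main obstacle is the well-definedness of $g$, i.e. confirming that the coefficients depend on $X$ only through $X^TX$. This is where the two ingredients must be combined carefully: when the columns of $X$ are linearly dependent the coefficient matrix $C$ from the first movement is genuinely non-unique, so one cannot simply read it off; the pseudoinverse supplies a canonical representative, and the identity $(QX)^{+} = X^{+}Q^T$ is exactly what makes this canonical choice compatible with equivariance. The remaining care is bookkeeping around the output length $m$ (the number of columns of $f(X)$): equivariance forces $m$ to be invariant under $X\mapsto QX$ and hence to depend only on $X^TX$, so $g(X^TX)$ indeed has a well-defined shape $n\times m$.
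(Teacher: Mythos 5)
Your proposal is correct, and while its first half mirrors the paper's argument, its second half takes a genuinely different and more self-contained route. For the span step, the paper uses a single Householder reflection $Q_{\bu} = I - \frac{2}{\|\bu\|^2}\bu\bu^T$ about the $V^\perp$-component $\bu$ of a column of $f(X)$: it fixes $X$, flips $\bu$, and equivariance forces $\bu=0$. Your stabilizer argument is the same idea (and your reflection $-I$ on $V^\perp$ alone already handles every dimension of $V^\perp$, so the case split between rotations and reflections is unnecessary). The real divergence is in the second step. The paper writes $f(X)=Xg(X)$, asserts that $g$ can ``obviously'' be chosen orthogonal invariant, and then cites the first fundamental theorem of invariant theory for the orthogonal group to conclude that such a $g$ depends on $X$ only through $X^TX$. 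You instead prove the needed fact directly --- the Gram matrix determines $X$ up to left multiplication by an orthogonal matrix, via the isometry $Xa\mapsto X'a$ extended to all of $\RR^d$ --- and you make the coefficient matrix canonical with the pseudoinverse: $(QX)^{+}=X^{+}Q^T$ gives well-definedness of $g(X^TX):=X^{+}f(X)$, and $Xg(X^TX)=XX^{+}f(X)=f(X)$ because the columns of $f(X)$ lie in $\mathrm{col}(X)$. This buys you two things: the proof is elementary and self-contained, requiring no invariant-theory citation, and it makes rigorous exactly the point the paper glosses over --- when $X$ is rank-deficient the coefficients are non-unique, and your canonical pseudoinverse choice (together with your observation that the output length is constant along each orbit) resolves this cleanly. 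What the paper's route buys in exchange is brevity, and the recognition that the proposition is a direct instance of a classical theorem.
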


Proposition~\ref{prop:orth_equi_1} shows that orthogonal equivariant seq2seq functions always represent a linear combination of the elements of their input sequence $X$, with the coefficients being orthogonal invariant. We give the proof in Appendix~\ref{app:proof1}. A similar result has appeared in~\cite{villar2021scalars} and played an important role for physics applications.

\subsection{Orthogonal equivariance with ``knowledge''}\label{ssec:orth_know}
Proposition~\ref{prop:orth_equi_1} treats seq2seq functions that are strictly orthogonal equivariant in the embedding space. For many practical language problems or other seq2seq learning problems, the embedding indeed has some flexibility over orthogonal transformations--the information is encoded only in the relative positions between vectors in the embedding space, and an orthogonal transformation of those vectors does not change the ``meaning'' of the sequence, hence the ``answer'' of the transformed input sequence should be the transformed original answer.

However, this intuitive symmetry does not mean that the target function is orthogonal equivariant. As an example, consider a seq2seq function $f$ that takes an arithmetic expression as the input and outputs the result of the expression, e.g $f(``2+1")=``3"$, $f(``2-1")=``1"$. The tokens used in the input and output sequences include single digit numbers $0-9$ and arithmetic operators. These tokens can be cast into vectors by an one-hot embedding. To be simple, suppose we only use operators ``+'' and ``-''. Then, the embedding space has $12$ dimensions. One possible embedding is 
\begin{equation*}
``+"\rightarrow\be_1,\ \ ``-"\rightarrow\be_2,\ \ ``0"\rightarrow\be_3,\ \ ``1"\rightarrow\be_4, \cdots ``9"\rightarrow\be_{12},
\end{equation*}
where $\be_i$ is the $i$-th unit vector in the standard orthonormal basis of $\RR^{12}$. Under this embedding, $f(``2+1")=``3"$ can be written as
\begin{equation*}
    f([\be_5,\be_1,\be_4]) = [\be_6].
\end{equation*}
Now, let $Q_{12}\in\RR^{12\times12}$ be a linear transformation that swaps the first and second entries of any vector in $\RR^{12}$. Then, $Q_{12}$ is orthogonal. If $f$ is orthogonal equivariant, we will have
\begin{equation*}
    f([\be_5,\be_2,\be_4]) = f(Q[\be_5,\be_1,\be_4]) = [Q\be_6] = [\be_6].
\end{equation*}
This means $f(``2-1")=``3"$, which is obviously not what we expect.

To summarize, the target function is not orthogonal equivariant because it works in a fixed embedding and cannot deal with sequences from different embeddings. The intuitive symmetry we discussed earlier can be understood as the symmetry in an equivalent class of target functions. Let $f$ be a seq2seq function in a certain embedding, if an orthogonal transformation $Q$ is applied to this embedding, then there exists another function $f_Q$ that satisfies
\begin{equation*}
    f_Q(QX) = Qf(X).
\end{equation*}
$f_Q$ does the same thing as $f$ in a different embedding. Collecting $f_Q$ for all orthogonal transformations $Q$, the set $\{f_Q\}$ is an equivalence class of $f$ in all embeddings (obtained by orthogonal transformations). 

The discussion above points out that the target function is aware of the embedding it works in. Intuitively, this is because the function contains some knowledge used to process the input sequence, and the knowledge depends on the embedding. Motivated by this point of view, we propose to study functions which take the ``knowledge'' as an explicit input. Like the input sequence, the knowledge also consists of vectors in the embedding space, showing its embedding dependence. The knowledge is used to extract information from the input sequence. Concretely, we consider functions $f:\cX\times\RR^{d\times k}\rightarrow\cX$ taking two inputs, $X\in\cX$ and $Z\in\RR^{d\times k}$, with $X$ being the original input sequence, and $Z$ being the knowledge. With this additional knowledge input, the function $f$ can be orthogonal equivariant---changing the embedding transforms $X$ and $Z$ simultaneously, and the true ``meaning'' of what $f$ does is not changed. In other words, the equivalent class $\{f_Q\}$ is parameterized by the knowledge input such that $f_Q(\cdot)=f(\cdot, QZ)$.

From now on, we study {\it orthogonal equivariant functions with knowledge}, whose definition is given below:

\begin{definition}\label{def:orth_equi_2}
Let $f:\cX\times\RR^{d\times k}\rightarrow \cX$ be a seq2seq function with knowledge. For any $Z\in\RR^{d\times k}$, $f$ is called {\it orthogonal equivariant with knowledge $Z$} if for any $X\in\cX$ and orthogonal matrix $Q\in\RR^{d\times d}$, there is $f(QX,QZ)=Qf(X,Z)$.
\end{definition}

%
As a corollary of Proposition~\ref{prop:orth_equi_1}, we have the following proposition characterizing the formulation of functions satisfying Definition~\ref{def:orth_equi_2}.

\begin{proposition}\label{prop:orth_equi_2}
Let $Z\in\RR^{d\times k}$, and $f:\cX\times\RR^{d\times k}\rightarrow\cX$ be a function that is orthogonal equivariant with knowledge $Z$. Then, there exist two functions $g_1$ and $g_2$ independent of $Z$, taking $X^TX, Z^TX, Z^TZ$ as inputs, and producing matrices with appropriate shapes as outputs, such that for all $X\in\cX$, we have
\begin{equation}\label{eqn:prop1}
    f(X,Z) = Xg_1(X^TX, Z^TX, Z^TZ) + Zg_2(X^TX, Z^TX, Z^TZ).
\end{equation}
\end{proposition}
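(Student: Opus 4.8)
The plan is to reduce Proposition~\ref{prop:orth_equi_2} to Proposition~\ref{prop:orth_equi_1} by a stacking trick. The key observation is that the two inputs $X\in\RR^{d\times n}$ and $Z\in\RR^{d\times k}$ both live in the same embedding space $\RR^d$ and are transformed by the \emph{same} orthogonal matrix $Q$ in the equivariance relation $f(QX,QZ)=Qf(X,Z)$. This suggests concatenating them columnwise. Define $W=[X,Z]\in\RR^{d\times(n+k)}$ and build a new single-input map $\tilde f$ acting on such concatenated sequences by $\tilde f(W)=\tilde f([X,Z])=f(X,Z)$; that is, $\tilde f$ reads off the first $n$ and last $k$ columns of its argument, applies $f$, and returns the result. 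The point is that $\tilde f$ is then a genuine orthogonal equivariant seq2seq function in the sense of Definition~\ref{def:orth_equi_1}, because $\tilde f(QW)=\tilde f([QX,QZ])=f(QX,QZ)=Qf(X,Z)=Q\tilde f(W)$.

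With $\tilde f$ in hand I would invoke Proposition~\ref{prop:orth_equi_1} directly: there exists a function $g$ such that
\begin{equation*}
    \tilde f(W)=W\,g(W^TW).
\end{equation*}
Now I would unfold the block structure. Writing $W=[X,Z]$ gives
\begin{equation*}
    W^TW=\begin{bmatrix} X^TX & X^TZ \\ Z^TX & Z^TZ \end{bmatrix},
\end{equation*}
so $g(W^TW)$ is a function of precisely the three Gram blocks $X^TX$, $Z^TX$ (equivalently $X^TZ$, its transpose), and $Z^TZ$. Partition the output matrix $g(W^TW)\in\RR^{(n+k)\times n}$ into an upper block of $n$ rows and a lower block of $k$ rows, naming them $g_1$ and $g_2$ respectively. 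Then the product $W\,g(W^TW)=Xg_1+Zg_2$ expands exactly into the claimed form~\eqref{eqn:prop1}, with both $g_1$ and $g_2$ depending only on $(X^TX,Z^TX,Z^TZ)$ and independent of the particular value of $Z$ beyond what enters through these Gram matrices.

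The step requiring the most care is confirming that $\tilde f$ genuinely satisfies the hypotheses of Proposition~\ref{prop:orth_equi_1} as stated, rather than just heuristically. Two points need checking. First, the output length: Proposition~\ref{prop:orth_equi_1} concerns maps $\cX\to\cX$, and one must verify that $\tilde f$ produces an output sequence of a well-defined length (here $n$, the length of $f(X,Z)$) consistent with the proposition's setup, and that $g$ is allowed to depend on the input length—which it is, since $g$ takes $W^TW$ as input and its output shape is permitted to vary with $n+k$. Second, and more subtle, one should confirm that the argument does not secretly require $\tilde f$ to be defined on \emph{all} of $\cX$ with a single consistent rule: since we have fixed $k$ but allow $n$ to range, the concatenated sequences $[X,Z]$ form a subfamily, and we must ensure that Proposition~\ref{prop:orth_equi_1}'s construction of $g$ applies uniformly. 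I expect this to go through because the proof of Proposition~\ref{prop:orth_equi_1} is pointwise in the input sequence, so restricting attention to concatenated inputs is harmless; nonetheless this is the place where a careless reduction could hide a gap, so it deserves explicit verification.
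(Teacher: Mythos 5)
Your proposal is correct and takes essentially the same route as the paper's own proof: both form the concatenation $[X,Z]$, apply Proposition~\ref{prop:orth_equi_1} to the resulting single-input orthogonal equivariant map, identify $[X,Z]^T[X,Z]$ with the three Gram blocks $X^TX$, $Z^TX$, $Z^TZ$, and split the coefficient matrix into its first $n$ rows ($g_1$) and last $k$ rows ($g_2$). Your closing check---that Proposition~\ref{prop:orth_equi_1}'s argument is pointwise (orbit-wise) and therefore applies to the subfamily of concatenated inputs even though $Z$ is fixed---is a subtlety the paper glosses over, and you resolve it correctly.
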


\begin{proof}
Let $\tilde{X}=[X, Z]\in\RR^{d\times(n+k)}$. Viewed as a function of $\tilde{X}$, $f$ satisfies $f(Q\tilde{X})=Qf(\tilde{X})$ for any orthogonal matrix $Q\in\RR^{d\times d}$. Hence, by Proposition~\ref{prop:orth_equi_1}, there exists a function $g$ depending on $\tilde{X}^T\tilde{X}$, such that 
\begin{equation*}
    f(\tilde{X}) = \tilde{X} g(\tilde{X}^T\tilde{X}).
\end{equation*}
By the definition of $\tilde{X}$, $g$ can be written as a function of $X^TX$, $Z^TX$ and $Z^TZ$, i.e. $g(\tilde{X}^T\tilde{X}) = g(X^TX, Z^TX, Z^TZ)$.
Noticing that $\tilde{X}$ has $n+k$ columns, $g$ must have $n+k$ rows. Letting 
\begin{equation*}
    g(X^TX, Z^TX, Z^TZ) = \left[\begin{array}{c}
        g_1(X^TX, Z^TX, Z^TZ)  \\
        g_2(X^TX, Z^TX, Z^TZ)
    \end{array}\right]
\end{equation*}
with $g_1$ taking the first $n$ rows and $g_2$ taking the next $k$ rows, we have 
\begin{equation*}
    f(X, Z) = Xg_1(X^TX, Z^TX, Z^TZ) + Zg_2(X^TX, Z^TX, Z^TZ).
\end{equation*}
\end{proof}

In practice, the knowledge $Z$ in a function $f(X,Z)$ studied above can be treated as a parameter matrix learned during the training process. We note that the self-attention in~\eqref{eqn:attention1} takes a similar form. In the self-attention, the product of $X$ with the attention matrix has the form $Xg(Z^TX)$, with $Z=[W_Q^T,W_K^T]$, and $g$ being the composition of a quadratic function and a softmax operation:
\begin{equation*}
    g(Y) = \cS_r\left(\frac{1}{\sqrt{d_1}}Y^T\left[\begin{array}{cc}
        0 & I \\
        0 & 0
    \end{array}\right]Y\right).
\end{equation*}
Indeed, similar as our understanding of $Z$, the query and key parameters in the self-attention are usually understood as knowledge of the language used to extract information from the input sequence. These parameters are naturally embedding dependent.
Certainly, the self-attention used in practice contains more components than merely a $Xg(Z^TX)$ form. For example, as shown in~\eqref{eqn:attention1}, a linear transformation in the embedding space is applied by $W_V$, and a residual connection is added. in Section~\ref{ssec:prac}, we discuss some practical considerations that may cause additional complication of the model in practice.

Coming back to the formulation~\ref{eqn:prop1}, if $Z$ is understood as a parameter matrix, it is fixed after training. Then, among the three inputs of $g_1$ and $g_2$, $Z^TZ$ is a constant, and $X^TX$ is an identity matrix under one-hot embedding. Hence, $Z^TX$ is the most informative input. Moreover, since $Z$ is a constant matrix, the linear combination of its columns, $Zg_2(X^TX, Z^TX, Z^TZ)$ becomes less important than the linear combination of $X$'s columns, $Xg_1(X^TX, Z^TX, Z^TZ)$. Extracting the most meaningful parts in the formulation~\ref{eqn:prop1}, we obtain a simpler form $f(X,Z)=Xg_1(Z^TX)$. This coincides with what appears in the self-attention.

\subsection{Finite information and the representation of coefficients}\label{ssec:g}

In formulation~\ref{eqn:prop1} or the simplified formulation $f(X,Z)=Xg_1(Z^TX)$, the coefficient functions $g_1$, $g_2$ can be quite arbitrary. They can have very complicated dependence with their inputs. For example, for a function $f(X,Z)=Xg(Z^TX)$ whose output has the same length as the input, when $X\in\RR^{d\times n}$, we have $g(Z^TX)\in\RR^{n\times n}$. In the most general case, $g$ can have a different formulation $g_n:\RR^{k\times n}\rightarrow\RR^{n\times n}$ for each $n$. Because $n$ can be arbitrarily large, the description of $g$ requires infinite amount of information.
However, if these functions can be described and implemented by machine learning models, they must contain only a finite amount of information. In other words, the functions cannot get infinitely complicated when the sizes of their inputs become large. In this section, based on this {\it finite information principle}, we discuss possible forms of the $g$'s. 

For the convenience of the discussion, we focus on the form $f(X,Z)=Xg(Z^TX)$ and assume that the output of $f$ has the same length as its input. Hence, for any input $X\in\RR^{d\times n}$, we have $g(Z^TX)\in\RR^{n\times n}$. In this case, we put our discussion on $g$ under the following more specific statement of the finite information principle:

\begin{assumption}{(Finite information principle)}\label{assump:finite_info}
$g$ is represented by a parameterized model with a finite number of parameters not depending on $n$.
\end{assumption}
This assumption concerns only one aspect of the broader idea of ``finite information''. But it is the only aspect that we can quantify easily.

Now, we consider parameterized representations for $g$. Given Assumption~\ref{assump:finite_info}, one of the simplest parameterizations is the composition of a nonlinear function and a quadratic forms, such as $\sigma(X^TZAZ^TX)$ for some matrix $A\in\RR^{k\times k}$. To see this, denote $Y=Z^TX\in\RR^{k\times n}$ and consider $g$ represented by a composition of an elementwise nonlinear function and a sum of matrix products involving $Y$, i.e.
\begin{equation}\label{eqn:g1_form}
g(Y) = \sigma\left(\sum\limits_{i=1}^N W_{i,0}\prod\limits_{j=1}^{K_i}\tilde{Y}W_{i,j}\right),
\end{equation}
where $\tilde{Y}$ is either $Y$ or $Y^T$, and $N$ can be infinity. In the formulation above, $W_{i,j}$ are parameters. By the finite information principle, the dimensions of $W_{ij}$ in~\eqref{eqn:g1_form} should not depend on $n$.
Then, it is easy to show that we always have $K_i\geq2$ in~\eqref{eqn:g1_form}, because terms with $K_i=0$ or $1$ cannot have shape $n\times n$ without $n$-dependent parameter matrices. Hence, there is no constant or linear terms in the sum of matrix products, and thus the simplest terms are quadratic terms. In its simplest form, without higher order terms, we have $g(Y)=\sigma(Y^TWY)$ for some $W\in\RR^{k\times k}$, in which case the output always has the shape $n\times n$ for any $n$. Note that the self-attention matrix used in practice is very close to this form. If $Z$ is the concatenation of the query and key matrices, i.e. $Z=[W_Q^T, W_K^T]$, then by taking $A=\left[\begin{array}{cc}
    0 & I \\
    0 & 0
\end{array}\right]$ we have
\begin{equation*}
    X^TZAZ^TX = X^TW_Q^TW_KX.
\end{equation*}
The only difference is that the softmax operation is not elementwise.

\subsubsection*{A perspective from kernels}
Another perspective to create $g$ with finite amount of information is from the kernels. Viewing the input $Y\in\RR^{k\times n}$ as $n$ vectors in $\RR^k$, $g$ maps the $n$ vectors into an $n\times n$ matrix, characterizing the relations between these vectors. This can naturally be achieved by a kernel function $K(\cdot,\cdot):\RR^k\times \RR^k\rightarrow\RR$. Denote $Y=[\by_1,...,\by_n]$, then we can let 
$g(Y)=(K(\by_i,\by_j))_{n\times n}$. When $K$ is an inner product kernel $K(\bx,\by)=\sigma(\bx^T\by)$, which is widely used in traditional machine learning models such as the support vector machine, $g$ takes a similar quadratic form (with an elementwise nonlinearity) as in the discussion above, i.e. $g(Y)=\sigma(Y^TY)$. Besides, there are more kernels to choose. For instance, a radio basis function (RBF) kernel $K(\bx,\by)=f(\|\bx-\by\|)$ can produce a $g$ defined by $g_{ij}(Y)=f(\|\by_i-\by_j\|)$. These representations of coefficients may see benefits in some special applications. Actually, self-attention using kernels has already been studied in previous works such as~\cite{Rymarczyk2021KernelSF,Chen2021SkyformerRS}

\section{Permutation equivariance for sequence elements}
in this section, we consider another symmetry---the permutation equivariance for the elements of the sequence. With this permutation equivariance, 
the form~\ref{eqn:prop1} can be further restricted. In a seq2seq problem such as a language problem, though, the order of the input is usually important. Hence, permutation equivariance on the order of the sequence should not be expected. 
However, in practice some parts of the problems or models may have permutation equivariance. 
For example, when self-attention based models are used to learn seq2seq problems, a position encoding is usually added to the input sequence before fed into the model~\cite{vaswani2017attention}. In this case, the order information is included in the input sequence and the function implemented by the model can be permutation equivariant.  

For any sequence $X\in\RR^{d\times n}$, we call $n$ the length of $X$, denoted by $l(X)$. We consider the following definition of permutation equivariance:

\begin{definition}\label{def:perm_equi}
Let $f:\cX\times\RR^{d\times k}\rightarrow\cX$ be a seq2seq function with knowledge. Assume $l(f(X,Z))=l(X)$ always holds. For any $Z\in\RR^{d\times k}$, $f$ is called {\it elementwise permutation equivariant} with knowledge $Z$ if for any permutation matrix $P\in\RR^{l(X)\times l(X)}$, we have $f(XP,Z)=f(X,Z)P$.
\end{definition}

Based on the discussions in previous sections, we focus on functions with the form $f(X,Z)=Xg(Z^TX)$. Given the additional permutation equivariance in Definition~\ref{def:perm_equi}, we have the following proposition that further narrows down the form of the functions. The proof of the proposition is given in Appendix~\ref{app:proof2}.

\begin{proposition}\label{prop:perm_equi}
Let $f$ be a function with form $f(X,Z)=Xg(Z^TX)$. Assume $f$ is elementwise permutation equivariant with knowledge $Z$. Then, for any specific $n$, there exist functions $\rho_1$, $\rho_2$, $\psi_1$, $\psi_2$, such that for any $X\in\RR^{d\times n}$ with full column rank, we have
\begin{align*}
g_{ii}(Z^TX) &= \rho_1\big(\bx_i, Z, \sum\limits_{k=1,\ k\neq i}^n\psi_1(\bx_k;\bx_i,Z)\big) \\
g_{ij}(Z^TX) &= \rho_2\big(\bx_i, \bx_j, Z, \sum\limits_{k=1,\ k\neq i,j}^n\psi_2(\bx_k;\bx_i,\bx_j,Z)\big)
\end{align*}
for $i,j=1,2,...,n$ and $j\neq i$. Here, $g_{ij}$ are the $(i,j)$-th component function of $g$, i.e. $g=(g_{ij})_{n
\times n}$.
\end{proposition}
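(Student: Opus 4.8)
The plan is to first convert the permutation equivariance of $f$ into an intertwining (conjugation) constraint on the coefficient matrix $g$, and then recognize that constraint as a Deep Sets–type symmetry that forces the claimed sum-decomposition.

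\textbf{Step 1 (reduce to a constraint on $g$).} Fix $X$ with full column rank and a permutation matrix $P=P_\pi$. Writing $Y=Z^TX$ and using $Z^T(XP)=YP$, the equivariance $f(XP,Z)=f(X,Z)P$ reads $XPg(YP)=Xg(Y)P$. Since $X$ has full column rank it admits a left inverse, so I may cancel $X$ on the left to obtain $g(YP)=P^{T}g(Y)P$. Reading this entrywise (with $(P_\pi)_{ai}=\delta_{a,\pi(i)}$) gives the key identity
$$ g_{ij}(Z^TXP_\pi)=g_{\pi(i)\pi(j)}(Z^TX) \qquad \text{for all } \pi,\ i,\ j. $$
Because column $l$ of $XP_\pi$ is $\bx_{\pi(l)}$, the left-hand side is simply $g_{ij}$ evaluated after the columns of $X$ have been reordered by $\pi$. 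It is therefore convenient to regard each $g_{ij}(Z^TX)$ as a function $G_{ij}(\bx_1,\dots,\bx_n)$ of the columns, carrying $Z$ along as a fixed parameter (recall $\by_l:=Z^T\bx_l$ is determined by $\bx_l$ and $Z$).

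\textbf{Step 2 (stabilizers give partial symmetry, Deep Sets gives the form).} For the diagonal, set $j=i$ and restrict to permutations $\pi$ fixing $i$. The identity then says $G_{ii}$ is invariant under arbitrary permutations of the arguments $\{\bx_l:l\ne i\}$ while $\bx_i$ is held fixed. Treating $\bx_i$ (and $Z$) as a parameter, $G_{ii}$ is a symmetric function of the remaining $n-1$ vectors, so by the sum-decomposition theorem for permutation-invariant functions~\cite{zaheer2017deep} there exist $\rho_1,\psi_1$ with
$$ G_{ii}(\bx_1,\dots,\bx_n)=\rho_1\Big(\bx_i,Z,\ \textstyle\sum_{k\ne i}\psi_1(\bx_k;\bx_i,Z)\Big). $$
The off-diagonal case is identical but conditions on the two distinguished vectors $\bx_i,\bx_j$: restricting to permutations fixing both $i$ and $j$ shows $G_{ij}$ is symmetric in $\{\bx_l:l\ne i,j\}$, and Deep Sets yields the analogous $\rho_2,\psi_2$ form.

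\textbf{Step 3 (one pair of functions for every index).} It remains to remove the dependence of the representation on which entry is being described. Given any $i$, pick $\pi$ with $\pi(1)=i$ and apply the identity with running index $1$: $G_{11}$ evaluated on the columns reordered by $\pi$ equals $g_{\pi(1)\pi(1)}=G_{ii}$. Since $G_{11}$ is symmetric in its last $n-1$ slots, only the fact that $\bx_i$ occupies the first slot matters, so $G_{ii}$ inherits the \emph{same} representation as $G_{11}$, giving uniform $\rho_1,\psi_1$. The off-diagonal entries are unified the same way by choosing $\pi$ with $\pi(1)=i,\pi(2)=j$ and reducing to $G_{12}$. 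This produces exactly the two displayed formulas (with $Z$ legitimately appearing as an explicit argument, since $\by_l=Z^T\bx_l$).

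\textbf{Main obstacle.} The genuinely nontrivial ingredient is the sum-decomposition (Deep Sets) representation of a permutation-invariant function as $\rho(\sum_k\psi(\cdot))$; the bookkeeping with permutation actions is routine once the intertwining identity of Step 1 is in hand. The care is needed in invoking Deep Sets on the relevant domain: the arguments range over tuples of linearly independent columns (the full-column-rank locus) rather than an unconstrained product space, and the representation must be applied conditionally on the distinguished vector(s) $\bx_i$ (resp. $\bx_i,\bx_j$). I would either cite the fixed-$n$ version of the representation theorem, or, to avoid any continuity hypotheses, build $\psi$ and $\rho$ from an injective multiset encoding, since the statement only asserts existence of such functions.
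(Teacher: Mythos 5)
Your proposal is correct and follows essentially the same route as the paper's proof: both cancel $X$ using its left (pseudo-)inverse to obtain the conjugation identity $g(Z^TXP)=P^Tg(Z^TX)P$, reduce all entries to $g_{11}$ and $g_{12}$ via swap permutations, and apply the Deep Sets sum-decomposition of \cite{zaheer2017deep} to the stabilizer symmetry in the remaining arguments. The only differences are cosmetic (you invoke Deep Sets entrywise first and unify afterward, while the paper unifies first) plus your added care about the full-column-rank domain when citing the representation theorem, which the paper glosses over.
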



\begin{remark}
For a self-attention layer used in practice, $Z=[W_Q,W_K]$, in which case we have 
\begin{equation*}
    g_{ij}(Z^TX) = \frac{e^{\bx_i^TW_Q^TW_K\bx_j}}{\sum_{k=1}^n e^{\bx_i^TW_Q^TW_K\bx_k}}.
\end{equation*}
Using the form in Proposotion~\ref{prop:perm_equi}, this $g$ can be obtained by taking
\begin{align*}
\psi_2(\bx;\by,\bz,Z) &= e^{\by^TW_Q^TW_K\bx},\quad
\rho_2(\bx,\by,Z,\psi) = \frac{e^{\bx^TW_Q^TW_K\by}}{e^{\bx^TW_Q^TW_K\by}+\psi},
\end{align*}
and taking 
\begin{equation*}
\psi_1(\bx;\by,Z) = \psi_2(\bx,\by,\by,Z),\ \ \ \rho_1(\bx,Z,\psi)=\rho_2(\bx,\bx,Z,\psi).
\end{equation*}
\end{remark}

\section{Practical considerations}\label{ssec:prac}
In previous sections, we revealed the natural forms of seq2seq functions that satisfy specific symmetries that are reasonable for many practical problems. Therefore, the structures identified can be taken into consideration when designing neural network models to learn these problems, as approaches to improve the efficiency of learning. The self-attention, although designed without utilizing these connections between symmetries and structures, has structures that coincides with the forms we identified. This may partially explain the success of self-attention based models. 

Usually, the models used in practice have to be more complicated than that given by the theory, to address practical issues that are not caught in the simplified setting of the theory. For CNNs, for example, convolution layers are stacked to extract features hierarchically, and normalization layers are added to help training. In the following, we discuss several considerations when the theories built in the previous sections are used in practical applications.

\subsubsection*{The evolution of embeddings}
In our analysis for orthogonal equivariant functions, we assume the output and the input of the functions are in the same embedding. In practice this might not be true. For example, for a translation problem the input and the output sequences are in two different languages, and hence they may not share one embedding. In this case, we need to implement a mechanism to change the embedding of the output sequence. The simplest way is to apply an elementwise linear transformation to the output, i.e. for a function $f$ with form $f(X,Z)=Xg(Z^TX)$, we can build a new function $\tilde{f}$ by multiplying a matrix on the left of the output of $f$:
\begin{equation}\label{eqn:add_right_1}
    \tilde{f}(X,Z,W) = WXg(Z^TX).
\end{equation}
A more flexible way is to apply a general elementwise nonlinear transformation to the output, which can be achieved for example by a two-layer neural network, as used in many self-attention based models:
\begin{equation}\label{eqn:add_right_2}
    \tilde{f}(X,Z, U, V) = V\sigma\big(UXg(Z^TX)\big),
\end{equation}
where $U$, $V$ are parameter matrices, and $\sigma$ is an elementwise nonlinear activation function.

\subsubsection*{Higher capacity}
In the application of neural networks, higher capacity of the model is desired in many cases. Giving the model more flexibility compared to the theoretical formulation can help improve the performance of the model, as long as the flexibility does not impair the training efficiency. Based on the structures in~\eqref{eqn:add_right_1} or~\ref{eqn:add_right_2}, more flexibility can be added to the model by considering a ``multihead'' version of such functions. For example, a multihead version for~\ref{eqn:add_right_1} with $m$ heads can be
\begin{equation}\label{eqn:add_multihead}
\tilde{f}_m(X,Z,W) = \sum\limits_{i=1}^m W_iXg_i(Z_i^TX),
\end{equation}
where $Z_1,...,Z_m$ and $W_1,..., W_m$ are different matrices, and $Z=[Z_1,...,Z_m]$, $W=[W_1,...,W_m]$, and $g_1,...,g_m$ are different functions. This structure is similar to the multihead self-attention.

\subsubsection*{Compositions and hierarchical feature extraction}
A very successful way to increase the capacity of a model and improve the performance of learning is to stack several modules compositionally to form a deep model. A deep model with many layers can extract the information from its input hierarchically. This is the intuitive reason behind the success of deep neural networks. For sequence to sequence applications, we can also stack structures like~\ref{eqn:add_multihead} into a deep model. For example, a model with $L$ layers can be
\begin{align*}
\small 
&h^{(0)} = X;\ \ \ \ h^{(l)} = \sum\limits_{i=1}^m W_i^{(l)}h^{(l-1)}g( (Z^{(l)}_i)^Th^{(l-1)}, ), 1\leq l\leq L;\ \ \ \ f(X, Z, W) = h^{(L)},
\end{align*}
where $Z$ and $W$ include all $Z_i^{(l)}$ and $W_i^{(l)}$ parameters, respectively. This structure looks similar to the successful large language models used in practice. One difference is that a residual link is added on each layer of those models to help the training. Another difference is that elementwise fully connected layers are added after some self-attentions, which can be understood as stacking structures in~\eqref{eqn:add_right_2}.

\section{Summary}
In this paper, we study the representations of sequence-to-sequence functions with certain symmetries, and show that such functions have forms similar to the self-attention. Hence, self-attention seems to be the natural structure to learn many seq2seq problems. Moreover, except the inner product based attention mechanism widely used nowadays, our study reveals more possibilities that may be picked in the design of attention mechanisms, such as higher-order matrix products or the RBF kernels. These forms arise from the discussion on the finite information principle. As a limitation, our discussion on the forms of $g$ in Section~\ref{ssec:g} started from a simple general form~\ref{eqn:g1_form}. More general discussions and more precise characterizations of the finite information principle is left as an important future work.

\bibliography{ref}
\bibliographystyle{iclr2023_conference}

\appendix
\section{Proof of Proposition~\ref{prop:orth_equi_1}}\label{app:proof1}
\begin{proof}
Consider $X=[\bx_1,\bx_2,...,\bx_n]\in\RR^{d\times n}\subset \cX$. We first show that the columns of $f(X)$ lie in the span of $\bx_1,...,\bx_n$. Without loss of generality, we assume $f(X)$ has only one column, i.e. $f(X)\in\RR^d$. Let $V=\textrm{span}(\bx_1,...,\bx_n)$. Then, there exist $\bv\in V$ and $\bu\in V^\perp$, such that $f(X) = \bv + \bu$. Let $Q_{\bu}$ be the Householder transformation
\begin{equation*}
    Q_{\bu} = I - \frac{2}{\|\bu\|^2}\bu\bu^T.
\end{equation*}
Then, $Q_{\bu}$ is an orthogonal matrix. By its definition, we have $Q_{\bu}\bu=-\bu$, and $Q_{\bu}\bw=\bw$ for any $\bw\perp\bu$, which implies $Q_{\bu}\bv=\bv$ and $Q_{\bu}\bx_i=\bx_i$ for all $i=1,2,...,n$. Since $f$ is orthogonal equivariant, we have
\begin{equation*}
    f(Q_{\bu}X) = Q_{\bu}f(X) = Q_{\bu}(\bv+\bu) = \bv-\bu.
\end{equation*}
On the other hand, since $Q_{\bu}X=X$, we must have 
\begin{equation*}
    f(Q_{\bu}X) = f(X) = \bv+\bu. 
\end{equation*}
Therefore, we have $\bu=0$, and $f(X)=\bv\in V$.

Next, we show that the coefficient of the linear combinations can be taken as orthogonal invariant functions. By the analysis above, there exists a function $g$ of input $X$, such that 
\begin{equation*}
    f(X) = Xg(X).
\end{equation*}
The size of $g$'s output depends on $X$ and $f(X)$. Because $f$ is orthogonal equivariant, for any orthogonal matrix $Q\in\RR^{d\times d}$ we have 
\begin{equation*}
    QXg(QX) = f(QX) = Qf(X) = QXg(X),
\end{equation*}
which means $Xg(QX)=Xg(X)$. Obviously, we can choose $g$ to satisfy $g(QX)=g(X)$ for any orthogonal $Q$. 

Finally, we invoke the first fundamental theorem of invariant theory for the orthogonal group~\cite{weyl1946classical,procesi2006lie}, which states that $g$ only depends on $X$ via $X^TX$. This completes the proof. 
\end{proof}

\section{Proof of Proposition~\ref{prop:perm_equi}}\label{app:proof2}
\begin{proof}
With an abuse of notations, we use $g(X,Z)$ to denote the output of $g$ given inputs $X$ and $Z$, despite that $g$ only depends on $Z^TX$. Since $X$ has full column rank, we have $X^\dagger X=I$. Hence, $g(X,Z) = X^\dagger f(X,Z)$. 
By Definition~\ref{def:perm_equi}, for any permutation matrix $P\in\RR^{n\times n}$, we have 
\begin{equation}\label{eqn:perm_equi_g}
    g(XP,Z) = P^TX^\dagger f(XP,Z) = P^TX^\dagger f(X,Z)P = P^T g(X,Z) P.
\end{equation}
Therefore, applying any permutation on $X$ leads to the same permutation on the rows and column of $g(X,Z)$. 
Recall that the $(i,j)$-th entry of $g$ is given by the function $g_{ij}$. Denote the output of $g_{ij}$ given input $X$ and $Z$ by $g_{ij}(\bx_1,...,\bx_n, Z)$. We then study the forms of $g_{ij}$ using~\eqref{eqn:perm_equi_g}. 

First, consider a permutation $P_{1i}$ that swaps $\bx_1$ and $\bx_i$. By~\eqref{eqn:perm_equi_g}, we have $g(XP_{1i}, Z)_{11}=g(X,Z)_{ii}$, which means
\begin{equation*}
g_{ii}(\bx_1,\cdots,\bx_i,\cdots,\bx_n,Z) = g_{11}(\bx_i,\cdots,\bx_1,\cdots,\bx_n,Z).
\end{equation*}
Hence, all $g_{ii}$ can be generated by $g_{11}$ with a swap permutation of its inputs. For $g_{11}$, if we apply a permutation that is identity on $1$, the output of $g_{11}$ does not change although the order of inputs is changed. This means $g_{11}$ is permutation invariant with the inputs $\bx_2,...,\bx_n$. By Theorem 2 in~\cite{zaheer2017deep}, viewed as a function of $\bx_2,...,\bx_n$, $g_{11}$ has the form $\rho(\sum_{k=2}^n \psi(\bx_k))$ for some functions $\rho$ and $\psi$. Considering the inputs $\bx_1$ and $Z$, the functions $\rho$ and $\psi$ above depend on $\bx_1$ and $Z$. Therefore, there exist functions $\rho_1$ and $\psi_1$, such that
\begin{equation*}
    g_{11}(X,Z) = \rho_1(\bx_1,Z,\sum_{k=2}^n \psi_1(\bx_k;\bx_1,Z)).
\end{equation*}
By the relation between $g_{11}$ and $g_{ii}$, we have 
\begin{equation*}
    g_{ii}(X,Z) = \rho_1(\bx_i,Z,\sum_{k\neq i} \psi_1(\bx_k;\bx_i,Z)).
\end{equation*}
for any $i=1,2,...,n$.

Next, we consider $g_{ij}$ with $i\neq j$. Without loss of generality, assume $i<j$. Let $P_{1i,2j}$ be a permutation that swaps $\bx_1$ with $\bx_i$, and $\bx_2$ with $\bx_j$. By the permutation equivariance, we have 
\begin{equation*}
g_{ij}(\bx_1,\bx_2,\cdots,\bx_i,\cdots,\bx_j,\cdots,\bx_n,Z) = g_{12}(\bx_i,\bx_j,\cdots,\bx_1,\cdots,\bx_2,\cdots,\bx_n,Z),
\end{equation*}
which means any $g_{ij}$ with $i\neq j$ can be generated by $g_{12}$. Focusing on $g_{12}$, similar to the arguments for $g_{11}$, it is easy to show that $g_{12}$ is permutation invariant with inputs $\bx_3,...,\bx_n$. Therefore, there exist functions $\rho_2$ and $\psi_2$, such that 
\begin{equation*}
g_{12}(X,Z) = \rho_2(\bx_1,\bx_2,Z,\sum_{k=3}^\infty \psi_2(\bx_k;\bx_1,\bx_2,Z)).
\end{equation*}
Hence,
\begin{equation*}
g_{ij}(X,Z) = \rho_2(\bx_i,\bx_j,Z,\sum_{k\neq i,j} \psi_2(\bx_k;\bx_i,\bx_j,Z)).
\end{equation*}
\end{proof}

\end{document}